\newtheorem{example}{Example}
\newtheorem{definition}{Definition}
\newtheorem{proposition}{Proposition}
\def\psfancypar#1#2{\begingroup\def\par{\endgraf\endgroup\lineskiplimit=0pt}
               \setbox2=\hbox{\large\sc #2}
               \newdimen\tmpht \tmpht \ht2 \advance\tmpht by \baselineskip
               \font\hhuge=Times-Bold at \tmpht
               \setbox1=\hbox{{\hhuge #1}}
               \count7=\tmpht \count8=\ht1
               \divide\count8 by 1000 \divide\count7 by \count8
               \tmpht=.001\tmpht\multiply\tmpht by \count7
               \font\hhuge=Times-Bold at \tmpht
               \setbox1=\hbox{{\hhuge #1}}
               \noindent
                \hangindent1.05\wd1
               \hangafter=-2 {\hskip-\hangindent
               \lower1\ht1\hbox{\raise1.0\ht2\copy1}%
                \kern-0\wd1}\copy2\lineskiplimit=-1000pt}
\newcommand{\beq}{\begin{equation}}
\newcommand{\eeq}{\end{equation}}
\newcommand{\bqa}{\begin{eqnarray}}
\newcommand{\eqa}{\end{eqnarray}}
\newcommand{\bqn}{\begin{eqnarray*}}
\newcommand{\eqn}{\end{eqnarray*}}
\newcommand{\nn}{\nonumber}
\newcommand{\be}{\begin{enumerate}}
\newcommand{\ee}{\end{enumerate}}
\newcommand{\bi}{\begin{itemize}}
\newcommand{\ei}{\end{itemize}}
\newcommand{\bd}{\begin{description}}
\newcommand{\ed}{\end{description}}
\newcommand{\ba}{\begin{array}}
\newcommand{\ea}{\end{array}}
\newcommand{\bde}{\begin{definition}}
\newcommand{\ede}{\end{definition}}
\newcommand{\bex}{\begin{example}}
\newcommand{\eex}{\end{example}}
\def\boxit#1{\vbox{\hrule\hbox{\vrule\kern3pt
        \vbox{\kern3pt#1\kern3pt}\kern3pt\vrule}\hrule}}
\def\dag{{\cal y}}
\def\reals{ { {\rm  I \kern-0.15em R }  } }
\def\complex{ {\,{{\rm C} \kern-0.50em \raise0.20ex {  |}}\, }}
\def\0bf{{\bf 0}}
\def\1bf{{\bf 1}}
\def\2bf{{\bf 2}}
\def\3bf{{\bf 3}}
\def\4bf{{\bf 4}}
\def\5bf{{\bf 5}}
\def\6bf{{\bf 6}}
\def\7bf{{\bf 7}}
\def\8bf{{\bf 8}}
\def\9bf{{\bf 9}}
\def\Rbf{{\bf R}}
\def\Rxx{\Rbf_{\ssstyle X\kern-.1em X}}
\let\ssstyle=\scriptscriptstyle
\def\Kout{\setbox1=\hbox{\Huge\bf K}\hbox to
1.05\wd1{\hspace{.05\wd1}
\def\Sout{\setbox1=\hbox{\Huge\bf S}\hbox to 1.05\wd1{\hspace{.05\wd1}


\title{Causal Discovery with Reinforcement \\ Learning}

%

\author{%
	Shengyu Zhu\textsuperscript{$\dag$}\quad Ignavier Ng\textsuperscript{$\mathsection$}\thanks{Work was done during an internship at Huawei Noah's Ark Lab.}\quad~\,Zhitang Chen\textsuperscript{$\dag$}\\
	{\textsuperscript{$\dag$}Huawei Noah's Ark Lab\quad\textsuperscript{$\mathsection$}University of Toronto}\\
	\textsuperscript{$\dag$}{\{\texttt{zhushengyu,chenzhitang2}\}\texttt{@huawei.com}\quad \textsuperscript{$\mathsection$}\texttt{ignavierng@cs.toronto.edu}}
}

\begin{document}
	
\maketitle
	
\begin{abstract}
Discovering causal structure among a set of variables is a fundamental problem in many empirical sciences. Traditional score-based casual discovery methods rely on various local heuristics to search for a Directed Acyclic Graph (DAG) according to a predefined score function. While these methods, e.g., greedy equivalence search, may have attractive results with infinite samples and certain model assumptions, they are less satisfactory in practice due to finite data and possible violation of assumptions. Motivated by recent advances in neural combinatorial optimization, we propose to use Reinforcement Learning (RL) to search for the DAG with the best scoring. Our encoder-decoder model takes observable data as input and generates graph adjacency matrices that are used to compute rewards. The reward incorporates both the predefined score function and two penalty terms for enforcing acyclicity. In contrast with typical RL applications where the goal is to learn a policy, we use RL as a search strategy and our final output would be the graph, among all graphs generated during training, that achieves the best reward. We conduct experiments on both synthetic and real datasets, and show that the proposed approach not only has an improved search ability but also allows a flexible score function under the acyclicity constraint. 	
\end{abstract}
	
\section{Introduction}
Discovering and understanding causal mechanisms underlying natural phenomena are important to many disciplines of sciences. An effective approach is to conduct controlled randomized experiments, which however is  expensive or even impossible in certain fields such as social sciences \citep{Kenneth1989structural} and bioinformatics \citep{Rhein2007correlation}. Causal discovery methods that infer causal relationships from passively observable data are hence attractive and have been an important research topic in the past decades \citep{Pearl2009causality, Spirtes2000, Peters2017book}. 
	
A major class of such causal discovery methods are score-based, which assign a score $\mathcal S(\mathcal G)$, typically computed with the observed data, to each directed graph $\mathcal {G}$ and then search over the space of all Directed Acyclic Graphs (DAGs) for the best scoring:
	\begin{align}
	\label{eqn:score_based}
	\min_{\mathcal G}~\mathcal S(\mathcal G),~\text{subject to}~\mathcal{G} \in\mathsf{DAGs}.
	\end{align}
While there have been well-defined score functions such as the Bayesian Information Criterion (BIC) or Minimum Description Length (MDL) score \citep{Schwarz1978estimating,Chickering2002optimal} and  the Bayesian Gaussian equivalent (BGe) score \citep{Geiger1994learning}, Problem (\ref{eqn:score_based})  is generally NP-hard to solve \citep{Chickering1996learning,Chickering2004large}, largely due to the combinatorial nature of its acyclicity constraint with the number of DAGs increasing super-exponentially in the number of graph nodes. To tackle this problem, most existing approaches rely on  local heuristics to enforce the acyclicity. For example, Greedy Equivalence Search (GES) enforces acyclicity one edge at a time, explicitly checking for the acyclicity constraint when an edge is added. GES is known to find global minimizer with infinite samples under suitable assumptions \citep{Chickering2002optimal,  Nandy2018high}, but this is not guaranteed in the finite sample regime. There are hybrid methods, e.g., the max-min hill climbing method \citep{Tsamardinos2006max}, which use constraint-based approaches to reduce the search space before applying score-based methods.  However, this methodology generally lacks a principled way of choosing a problem-specific combination of score functions and search strategies. 

Recently, \cite{Zheng2018dags} introduced a smooth  characterization for the acyclicity. With linear models, Problem (\ref{eqn:score_based}) was then formulated as a continuous optimization problem w.r.t.~the weighted graph adjacency matrix by picking a proper loss function, e.g., the least squares loss. Subsequent works \cite{Yu19DAGGNN} and \cite{Lachapelle2019grandag} have also adopted the evidence lower bound and the negative log-likelihood as loss functions, respectively, and used Neural Networks (NNs) to model the causal relationships. Note that the loss functions in these methods must be carefully chosen in order to apply continuous optimization methods. Unfortunately, many effective score functions, e.g., the  generalized score function proposed by \cite{Huang2018generalized} and the independence based score function from \cite{Peters2014causal}, either cannot be represented in closed forms or have very complicated equivalent loss functions, and thus cannot be easily combined with this approach.

We propose to use Reinforcement Learning (RL) to search for the DAG with the best score according to a predefined score function, as outlined in Figure~\ref{fig:big_str}. The insight is that an RL agent with stochastic policy can determine automatically where to search given the uncertainty information of the learned policy, which can be updated promptly by the stream of reward signals.  To apply RL to causal discovery, we use an encoder-decoder NN model to generate directed  graphs from the observed data, which are then used to compute rewards consisting of the predefined score function as well as two penalty terms to enforce acyclicity.  We resort to policy gradient and stochastic optimization methods to train the weights of the NNs, and our output is the graph that achieves the best reward,  among all graphs generated in the training process. Experiments on both synthetic and real datasets show that our approach has a much improved search ability without sacrificing any flexibility in choosing score functions. In particular, the proposed approach with BIC score outperforms GES with the same score function on Linear Non-Gaussian Acyclic Model (LiNGAM) and linear-Gaussian datasets, and also outperforms recent gradient based methods when the causal relationships are nonlinear. 

\begin{figure}[hbtp!]
	\centering
	\includegraphics[width=0.6\textwidth]{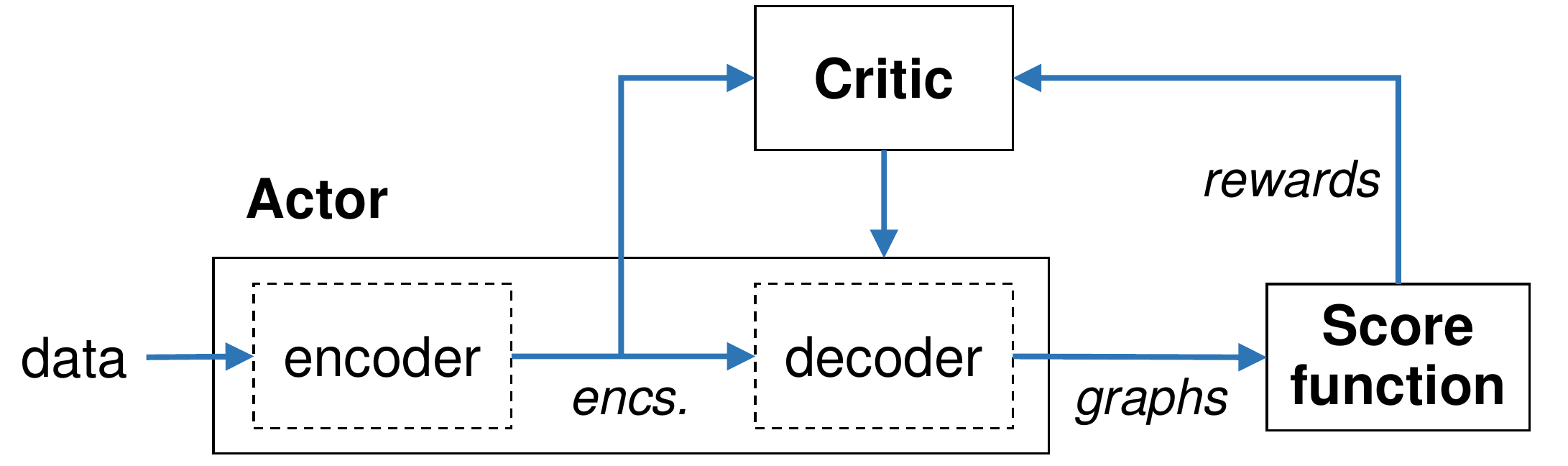}
	\caption{Reinforcement learning for score-based causal discovery.}
	\label{fig:big_str}
\end{figure}

\section{Related Work}
 Constraint-based causal discovery methods first use conditional independence tests to find causal skeleton and then determine the orientations of the edges up to the Markov equivalence class, which usually contains DAGs that can be structurally diverse and may still have many unoriented edges. Examples include \cite{Sun2007kernel,Zhang2012kernel} that use kernel-based conditional independence criteria and the well-known PC algorithm \citep{Spirtes2000}. This class of methods involve a multiple testing problem where the tests are usually conducted independently. The testing results may have conflicts and handling them is not easy, though there are certain works, e.g., \cite{Hyttinen2014constraint}, attempting to tackle this problem. These methods are also not robust as small errors in building  the graph skeleton can result in  large errors in the inferred Markov equivalence class.
 
 Another class of causal discovery methods are based on properly defined functional causal models. Unlike constraint-based methods that assume faithfulness and identify only the Markov equivalence class, these methods are able to distinguish between different DAGs in the same equivalence class, thanks to the additional assumptions on  data distribution and/or functional classes. Examples include LiNGAM \citep{Shimizu2006lingam,Shimizu2011directlingam}, the nonlinear additive noise model \citep{Hoyer2009NonlinearCausal,Peters2014causal,Peters2017book}, and the post-nonlinear causal model \citep{Zhang2009identifiability}. 

Besides \cite{Yu19DAGGNN,Lachapelle2019grandag}, other recent NN based approaches  to causal discovery include \cite{Goudet2018learning} that proposes causal generative NNs to  functional causal modeling with a prior knowledge of initial skeleton of the causal graph and \cite{Klainathan2018sam} that learns causal generative models in an adversarial way but does not guarantee acyclicity.

 Recent advances in sequence-to-sequence learning \citep{Sutskever2014sequence} have motivated the use of NNs for optimization in various domains  \citep{Vinyals2015pointer, Zoph2016neural,Chen2017learning}. A particular example is the traveling salesman problem  that was revisited in the work of pointer networks \citep{Vinyals2015pointer}. Authors proposed a recurrent NN with nonparametric softmaxes  trained in a supervised manner to predict the sequence of visited cities. \citet{Bello2016} further proposed to use the RL paradigm to tackle the  combinatorial problems due to their relatively simple reward mechanisms. It was shown that an RL agent can have a better generalization even when the optimal solutions are used as labeled data in the previous supervised approach. Alternatively, the RL based approach in \citet{Dai2017learning} considered  combinatorial optimization problems on (undirected) graphs and achieved a promising performance by  exploiting graph structures, in contrast  with the general sequence-to-sequence modeling.

There are many other successful RL applications in recent years, e.g., AlphaGo  \citep{Silver2017mastering}, where the goal is to learn a policy for a given task. As an exception, \cite{Zoph2016neural} applied RL to neural architecture search. While we use a similar idea as the RL paradigm can naturally include the search task, our work is different in the actor and reward designs: our actor is an encoder-decoder model that generates graph adjacency matrices (cf.~Section~\ref{sec:NNforGraph}) and the reward is tailored for causal discovery by incorporating a score function and the acyclicity constraint (cf.~Section~\ref{sec:reward}).

\section{Model Definition}
We assume the following model for data generating procedure, as in \cite{Hoyer2009NonlinearCausal,Peters2014causal}. Each variable  $x_i$ is associated with a node $i$ in a $d$-node DAG  $\mathcal G$, and the observed value of $x_i$ is obtained as a function of its parents in the graph plus an independent additive noise $n_i$, i.e.,
	\[x_i \coloneqq  f_i(\mathbf{x}_{\mathrm{pa}(i)})+n_i,i=1,2,\dots, d,\]
	where $\mathbf{x}_{\mathrm{pa}(i)}$ denotes the set of  variables $x_j$ so that there is an edge from $x_j$ to $x_i$ in the graph, and the noises $n_i$ are assumed to be jointly independent. We also assume causal minimality, which in this case reduces to that each function $f_i$ is not a constant in any of its arguments \citep{Peters2014causal}. Without further assumption on the forms of functions and/or noises, the above model can be identified only up to Markov equivalence class under the usual Markov and faithful assumptions \citep{Spirtes2000, Peters2014causal}; in our experiments we will consider synthetic datasets that are generated from fully identifiable models so that it is practically meaningful to evaluate the estimated graph w.r.t.~the true DAG. If all the functions $f_i$ are linear and  the noises $n_i$ are Gaussian distributed, the above model yields the class of standard linear-Gaussian model that has been studied in \cite{Kenneth1989structural,Geiger1994learning,Spirtes2000,Peters2017book}. When the functions are linear but the noises are non-Gaussian, one can obtain the LiNGAM described in \cite{Shimizu2006lingam,Shimizu2011directlingam} and the true DAG can be uniquely identified under favorable conditions. 
	
	In this paper, we consider that all the variables $x_i$ are scalars; extending to more complex cases is straightforward, provided with a properly defined score function. The observed data $\mathbf X$, consisting of a number of vectors $\mathbf x\coloneqq [x_1,x_2,\ldots, x_d]^T\in\mathbb R^d$, are then sampled independently according to the above model on an unknown DAG, with fixed functions $f_i$ and fixed distributions for $n_i$. The objective of causal discovery is to use the observed data $\mathbf X$, which gives the empirical version of the joint distribution of $\mathbf x$, to infer the underlying causal DAG $\mathcal G$.

\section{Neural Network Architecture for Graph Generation}
\label{sec:NNforGraph}
\label{sec:nn_graph}
Given a dataset $\mathbf X=\{\mathbf x^k\}_{k=1}^m$ where  $\mathbf x^k$ denotes the $k$-th observed sample, we want to infer the causal graph that best describes the data generating procedure. We would like to use NNs to infer the causal graph from the observed data; specifically, we aim to design an NN based graph generator whose input is the observed data and the output is a graph adjacency matrix. A naive choice would be using feed-forward NNs to output $d^2$ scalars and then reshape them  to an adjacency matrix in $\mathbb R^{d\times d}$. However, this NN structure failed to produce promising results, possibly because the feed-forward NNs could not provide sufficient interactions amongst variables to capture the causal relations.
	
Motivated by recent advances in neural combinatorial optimization, particularly the pointer networks \citep{Bello2016,Vinyals2015pointer}, we draw $n$ random samples (with replacement) $\{\mathbf x^{l}\}_{l=1}^n$ from $\mathbf X$ and reshape them as $\mathbf s \coloneqq \{\tilde{\mathbf x}_{i}\}_{i=1}^d$ where $\tilde{\mathbf x}_i\in\mathbb{R}^n$ is the vector concatenating all the $i$-th entries of the vectors in $\{{\mathbf x}^{l}\}_{l=1}^n$. In an analogy to the traveling salesman problem, this represents a sequence of $d$ cities lying in an $n$-dim space. We are concerned with generating a binary adjacency matrix $A \in \{0,1\}^{d\times d}$ so that the corresponding graph is acyclic and achieves the best score. In this work we consider encoder-decoder models for graph generation:

{\bf Encoder}\quad We use the attention based encoder in the Transformer structure proposed by \citet{Vaswani2017attention}. We believe that the self-attention scheme, together with structural DAG constraint, is capable of finding the causal relations amongst variables. Other attention based models such as graph attention network \citep{velickovic2018graph} may also be used, which will be considered in a future work. Denote the outputs of the encoder by $enc_i,i=1,2,\ldots,d$, with  dimension $d_e$.

{\bf Decoder}\quad Our decoder generates the graph adjacency matrix in an element-wise manner, by building relationships between two encoder outputs $enc_i$ and $enc_j$. We consider the single layer decoder
\begin{align}
g_{ij}(W_1,W_2,u) = u^{T} \tanh(W_1\,enc_i+W_2\,enc_j),\nn
\end{align}
where $W_1,W_2\in\mathbb R^{d_h\times d_{e}}$, $u\in\mathbb R^{d_h\times 1}$ are trainable parameters and $d_h$ is the hidden dimension associated with the decoder. To generate a binary adjacency matrix $A$, we pass each entry $g_{ij}$ into a logistic sigmoid function $\sigma(\cdot)$ and then sample according to a Bernoulli distribution with probability $\sigma(g_{ij})$ that indicates the probability of existing an edge from $x_i$ to $x_j$. To avoid self-loops, we simply mask the $(i,i)$-th entry in the adjacency matrix.

Other decoder choices include the neural tensor network model  \citep{Socher2013reasoning} and the bilinear model that build the pairwise relationships between encoder outputs. Another choice is the Transformer decoder which can generate an adjacency matrix in a row-wise manner. Empirically, we find that the single layer decoder performs the best, possibly because it contains less parameters and is easier to train to find better DAGs while the self-attention based encoder has provided sufficient interactions amongst the variables for causal discovery. Appendix~\ref{app:decoders} provides more details regarding these decoders and their empirical results with linear-Gaussian data models.

\section{Reinforcement Learning for Search}
\label{sec:RLsearch}
In this section, we use RL as our search strategy to find the DAG with the best score, as outlined in Figure~\ref{fig:big_str}. As one will see, the proposed method possesses an improved search ability over traditional score-based methods and also allows flexible score functions subject to the acyclicity constraint.  

\subsection{Score Function, Acyclicity, and Reward}
\label{sec:reward}
{\bf Score Function}\quad In this work, we consider only existing score functions to construct the reward that will be maximized by an RL agent. Often score-based methods assume a parametric model for causal relationships (e.g., linear-Gaussian equations or multinomial distribution), which introduces a set of parameters $\mathbf\theta$. Among all score functions that can be directly included here, we focus on the BIC score that is not only consistent \citep{Haughton1988choice} but also locally consistent for its decomposability \citep{Chickering1996learning}. 

The BIC score for a given directed  graph $\mathcal G$ is
\[\mathcal S_{\textrm{BIC}}(\mathcal G)=-2\log p(\mathbf X; \hat\theta,\mathcal G)+d_{\theta}\log m,\]
where $\hat\theta$ is the maximum likelihood estimator and $d_{\theta}$ denotes the dimensionality of the parameter $\theta$. We assume i.i.d.~Gaussian additive noises throughout this paper. If we apply linear models to each causal relationship and let $\hat x^k_i$ be the corresponding estimate for $x_i^k$, the $i$-th entry in the $k$-th observed sample, then we have the BIC score being (up to some additive constant) 
\begin{align}
\label{eqn:BIC}
\mathcal S_{\textrm{BIC}}(\mathcal G)=\sum_{i=1}^d\big(m\log(\mathrm{RSS}_i/m)\big)+\#(\text{edges})\log m,
\end{align}
where $\mathrm{RSS}_i=\sum_{k=1}^m(x^k_i-\hat x^k_i)^2$ denotes the residual sum of squares for the $i$-th variable. The first term in Eq.~(\ref{eqn:BIC}) is equivalent to the log-likelihood objective used by GraN-DAG \citep{Lachapelle2019grandag} and the second term adds penalty on the number of edges in the graph $\mathcal G$. Further assuming that the noise variances are equal (despite the fact that they may be different), we have 
\begin{align}
\label{eqn:BIC_same_var}
\mathcal S_{\textrm{BIC}}(\mathcal G)=md\log\left(\left(\sum_{i=1}^d\mathrm{RSS}_i\right)/(md)\right)+\#(\text{edges})\log m.
\end{align}
We notice that $\sum_{i}\mathrm{RSS}_i$ is the least squares loss used in NOTEARS \citep{Zheng2018dags}. Besides assuming linear models, other regression methods can also be used to estimate $x_i^k$. In Section~\ref{sec:exp}, we will use quadratic regression and Gaussian Process Regression (GPR) to model causal 
relationships based on the observed data.

{\bf Acyclicity}\quad A remaining issue is the acyclicity constraint. Other than GES that explicitly checks for acyclicity each time an edge is added, we add penalty terms w.r.t.~acyclicity to the score function to enforce acyclicity in an implicit manner and allow the generated graph to change more than one edges at each iteration. In this work, we use a recent result from \cite{Zheng2018dags}: a directed graph $\mathcal G$ with binary adjacency matrix $A$ is acyclic if and only  if 
\begin{equation}
\label{eqn:dagness}
h(A)\coloneqq\operatorname{trace}\left(e^A\right)-d=0,
\end{equation}
where $e^A$ is the matrix exponential of $A$. We find that $h(A)$, which is non-negative, can be small for cyclic graphs and the minimum over all non-DAGs is not easy to find. We would require a very large penalty weight to guarantee acyclicity if only $h(A)$ is used. We thus add another penalty term,  the indicator function w.r.t.~acyclicity, to induce exact DAGs. We remark that other functions (e.g., the total length of all cyclic paths in the graph), which compute some  `distance' from a directed graph to $\mathsf{DAGs}$ and need not be smooth, may also be used to construct the acyclicity penalty in our approach.
	
{\bf Reward}\quad Our reward incorporates both the score function and the acyclicity constraint:
	\begin{align}
	\label{eqn:reward}
	\mathrm{reward}\coloneqq -\left[\mathcal S(\mathcal G) + \lambda_1\mathbf I(\mathcal G \notin\mathsf{DAGs}) +  \lambda_2h(A)\right],
	\end{align}
where $\mathbf I(\cdot)$ denotes the indicator function and $\lambda_1,\lambda_2\geq0$ are two penalty parameters. It is not hard to see that the larger $\lambda_1$ and $\lambda_2$ are, the more likely a generated graph with a high reward is acyclic. We then aim to maximize the reward over all possible directed graphs, or equivalently, we have
\begin{align}
\label{eqn:maxreward}
\min_{\mathcal G}~\left[\mathcal S(\mathcal G) + \lambda_1\mathbf I(\mathcal G \notin\mathsf{DAGs}) +  \lambda_2h(A)\right].
\end{align}
An interesting question is whether this new formulation is equivalent to the original problem with hard acyclicity constraint. Fortunately, the following proposition guarantees that Problems~(\ref{eqn:score_based}) and (\ref{eqn:maxreward}) are equivalent with properly chosen $\lambda_1$ and $\lambda_2$, which can be verified by showing that a minimizer of one problem is also a solution to the other. A proof is provided in Appendix~\ref{sec:proof_prop1} for completeness. 
\begin{proposition}	
\label{prop:eq1_3} Let $h_{\min}>0$ be the minimum of $h(A)$ over all directed cyclic graphs, i.e., $h_{\min} = \min_{\,\mathcal G \notin\mathsf{DAGs}} h(A)$. Let $\mathcal S^*$ denote the optimal score achieved by some DAG in Problem~(\ref{eqn:score_based}). Assume that $\mathcal S_L\in\mathbb R$ is a lower bound of the score function over all possible directed graphs, i.e., $S_L\leq\min_{\,\mathcal G} \mathcal S(\mathcal G)$, and  $S_U\in\mathbb R$ is an upper bound on the optimal score with $\mathcal S^*\leq \mathcal S_U$. Then Problems~(\ref{eqn:score_based}) and (\ref{eqn:maxreward}) are equivalent if 
\[\lambda_1 + \lambda_2h_{\min} \geq \mathcal S_U-\mathcal S_L.\]
\end{proposition}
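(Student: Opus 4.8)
The plan is a standard exact-penalty argument: with the stated $\lambda_1,\lambda_2$, the penalty in Problem~(\ref{eqn:maxreward}) is large enough that no cyclic graph can undercut the best DAG, so the two problems share the same optimal value and an optimal DAG. Write $F(\mathcal G)\coloneqq \mathcal S(\mathcal G)+\lambda_1\mathbf I(\mathcal G\notin\mathsf{DAGs})+\lambda_2 h(A)$ for the objective of Problem~(\ref{eqn:maxreward}). Since the node set has a fixed size $d$, every minimum in sight ($\mathcal S^*$, $h_{\min}$, $\min_{\mathcal G}\mathcal S(\mathcal G)$) is taken over a finite collection of directed graphs and is hence attained; in particular $h_{\min}>0$ is well defined.

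First I would evaluate $F$ on DAGs. By the characterization $h(A)=\operatorname{trace}(e^A)-d=0$ for acyclic $\mathcal G$ (Eq.~(\ref{eqn:dagness})), both penalty terms vanish, so $F(\mathcal G)=\mathcal S(\mathcal G)$ on $\mathsf{DAGs}$; hence $\min_{\mathcal G\in\mathsf{DAGs}}F(\mathcal G)=\mathcal S^*$, and the minimizers of $F$ restricted to $\mathsf{DAGs}$ are exactly the solutions of Problem~(\ref{eqn:score_based}). Next I would bound $F$ below on cyclic graphs: for any $\mathcal G\notin\mathsf{DAGs}$ we have $\mathbf I(\mathcal G\notin\mathsf{DAGs})=1$ and $h(A)\ge h_{\min}$, so
\[
F(\mathcal G)=\mathcal S(\mathcal G)+\lambda_1+\lambda_2 h(A)\ \ge\ \mathcal S_L+\lambda_1+\lambda_2 h_{\min}\ \ge\ \mathcal S_L+(\mathcal S_U-\mathcal S_L)\ =\ \mathcal S_U\ \ge\ \mathcal S^*,
\]
using the lower bound $\mathcal S(\mathcal G)\ge\mathcal S_L$, the hypothesis $\lambda_1+\lambda_2 h_{\min}\ge\mathcal S_U-\mathcal S_L$, and $\mathcal S_U\ge\mathcal S^*$.

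Combining the two steps gives $\min_{\mathcal G}F(\mathcal G)=\mathcal S^*$, attained at any optimal DAG of Problem~(\ref{eqn:score_based}); conversely any minimizer of $F$ has value $\mathcal S^*\le\mathcal S_U$, so by the displayed chain it cannot be cyclic unless that chain collapses to equalities, i.e.\ $\mathcal S(\mathcal G)=\mathcal S_L$, $h(A)=h_{\min}$, $\mathcal S_U=\mathcal S^*$ and $\lambda_1+\lambda_2 h_{\min}=\mathcal S_U-\mathcal S_L$ all hold simultaneously. Outside that degenerate boundary case every minimizer of Problem~(\ref{eqn:maxreward}) is a DAG and, by the first step, solves Problem~(\ref{eqn:score_based}), so a minimizer of either problem is a minimizer of the other.

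There is no real obstacle in the argument; the only delicate point is the last one — distinguishing ``the two problems have the same optimal value and share an optimal DAG'' from the stronger ``every minimizer of the penalized problem is acyclic.'' The latter is immediate once the penalty hypothesis is assumed to hold strictly (then $F(\mathcal G)>\mathcal S^*$ for every cyclic $\mathcal G$), and I would either impose that strict version or simply phrase the conclusion at the level of optimal values and the existence of a common DAG minimizer, which is what the method actually relies on.
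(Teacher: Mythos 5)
Your argument is correct and is essentially the paper's own proof: both hinge on the same chain $\mathcal S(\mathcal G)+\lambda_1+\lambda_2 h(A)\geq \mathcal S_L+\lambda_1+\lambda_2 h_{\min}\geq \mathcal S_U\geq \mathcal S^*$ for cyclic $\mathcal G$, with the paper merely phrasing the two directions as proofs by contradiction rather than directly comparing optimal values. Your closing caveat is in fact a minor sharpening of the paper's version, whose second direction tacitly treats the minimizer's inequality as strict and thus glosses over the degenerate boundary case (all inequalities tight) in which a cyclic graph could tie the optimal DAG.
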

For practical use, we need to find respective quantities in order to choose  proper penalty parameters. An upper bound $\mathcal S_U$ can be easily found by drawing some random DAGs or using the results from other methods like NOTEARS. A lower bound $\mathcal S_L$ depends on the particular score function. With BIC score, we can fit each variable $x_i$ against all the rest variables, and use only the $\mathrm{RSS}_i$ terms but ignore the additive penalty on the number of edges. With the independence based score function proposed by \cite{Peters2014causal}, we may simply set $\mathcal S_L=0$. The minimum term $h_{\min}$, as previously mentioned, may not be easy to find. Fortunately, with $\lambda_1=\mathcal S_U-\mathcal S_L$, Proposition~\ref{prop:eq1_3} guarantees the equivalence of Problems~(\ref{eqn:score_based}) and (\ref{eqn:maxreward}) for any $\lambda_2\geq0$. However, simply setting $\lambda_2=0$ could only get good performance with very small graphs (see a discussion in Appendix~\ref{app:only_indicator}). We will pick a relatively small value for $\lambda_2$, which helps to generate directed graphs that become closer to $\mathsf{DAGs}$.

Empirically, we find that if the initial penalty weights are set too large, the score function would have little effect on the reward, which then limits the exploration of the RL agent and usually results in DAGs with high scores. Similar to Lagrangian methods, we can start with small penalty weights and gradually increase them so that the condition in Proposition~\ref{prop:eq1_3} is satisfied. Meanwhile, we notice that different score functions may have different ranges while the acyclicity penalty terms are independent of the particular range of the score function. We hence also adjust the predefined scores to a certain range by using $\mathcal S_{0}(\mathcal S-\mathcal S_L)/(\mathcal S_U-\mathcal S_L)$ for some $\mathcal S_{0}>0$ and the optimal score will lie in $[0, \mathcal S_0]$.\footnote{When $\mathcal S_U-\mathcal S_L=0$, then we have obtained the solution if we know the graph that achieves $\mathcal S_U$ or $\mathcal S_L$, or otherwise we may simply pick a new upper bound as $\mathcal S_U+1$.} Our algorithm is summarized in Algorithm~\ref{alg:alg11}, where $\Delta_1$ and $\Delta_2$ are the updating parameters associated with $\lambda_1$ and $\lambda_2$, respectively, and $t_0$ denotes the updating frequency. The weight $\lambda_2$ is updated in a similar manner to the updating rule on the Lagrange multiplier used by NOTEARS and we set $\Lambda_2$ as an upper bound on $\lambda_2$, as previously discussed. In all our experiments that use BIC as score function, $\mathcal S_L$ is obtained from a complete directed graph and $\mathcal S_U$ is from an empty graph. Since $\mathcal S_U$ with the empty graph can be very high for large graphs, we also update it by keeping track of the lowest score achieved by DAGs generated during training.  Other parameter choices in this work are $\mathcal S_0=5$, $t_0=1,000$, $\lambda_1=0$, $\Delta_1=1$, $\lambda_2=10^{-\lceil d/3\rceil}$, $\Delta_2=10$ and $\Lambda_2=0.01$. We comment that these parameter choices may be further tuned for specific applications, and the inferred causal graph would be the one that is acyclic and achieves the best score, among all the final outputs (cf.~Section~\ref{sec:final}) of the RL approach with different parameter choices.

\begin{algorithm}[t] 
\caption{The proposed RL approach to score-based causal discovery} 
\label{alg:alg11} 
\begin{algorithmic}[1] 
    \Require score parameters: $\mathcal S_L$, $\mathcal S_U$, and $\mathcal S_{0}$; penalty parameters: $\lambda_1$, $\Delta_1$, $\lambda_2$, $\Delta_2$, and $\Lambda_2$; iteration number for parameter update: $t_0$. 
    \For{$t=1,2,\ldots$}
        \State Run actor-critic algorithm, with score adjustment by $\mathcal S\leftarrow\mathcal S_{0}(\mathcal S-\mathcal S_L)/(\mathcal S_U-\mathcal S_L)$
        \If{$t\pmod {t_0}=0$}
            \If{the maximum reward corresponds to a DAG with score $\mathcal S_{\min}$}
                \State update $\mathcal S_U\leftarrow \min(\mathcal S_U,\mathcal S_{\min})$
            \EndIf
            \State update $\lambda_1\leftarrow\min(\lambda_1+\Delta_1,\mathcal S_U)$ and $\lambda_2\leftarrow\min(\lambda_2\Delta_2, \Lambda_2)$ 
            \State update recorded rewards according to new $\lambda_1$ and $\lambda_2$
        \EndIf
    \EndFor
\end{algorithmic}
\end{algorithm}
	
\subsection{Actor-Critic Algorithm}	
\label{sec:AC}
We believe that the exploitation and exploration scheme in the RL paradigm provides an appropriate way to guide the search. Let $\pi(\cdot\mid\mathbf s)$ and $\psi$ denote the policy and NN parameters for graph generation, respectively. Our training objective is the expected reward defined as
	\begin{align}
	\label{eqn:obj}
	J(\mathbf{\psi}\mid\mathbf s) = \mathbb E_{A\sim \pi(\cdot\mid\mathbf s)}\left\{-\left[\mathcal S(\mathcal G) + \lambda_1\mathbf I(\mathcal G \notin\mathsf{DAGs}) +  \lambda_2h(A)\right]\right\}.
	\end{align}
During training, the input $\mathbf s$ is constructed by randomly drawing samples from the observed dataset $\mathbf X$, as described in Section~\ref{sec:nn_graph}. 
	
We resort to policy gradient methods and stochastic methods to optimize the parameters $\psi$. The gradient $\nabla_{\psi} J(\psi\mid\mathbf s)$ can be obtained by the well-known REINFORCE algorithm \citep{Williams1992simple,Sutton2000policy}. We draw $B$ samples $\mathbf s_1, \mathbf s_2, \ldots,\mathbf s_B$ as a batch to estimate the gradient which is then used to train the NNs through stochastic optimization methods like Adam \citep{Kingma2014adam}. Using a parametric baseline to estimate the reward can also help training \citep{Konda2000ActorCritic}. For the present work, our critic is a simple 2-layer feed-forward NN with ReLU units, with the input being the encoder outputs $\{enc_i\}_{i=1}^d$. The critic is trained with Adam on a mean squared error between its predictions and the true rewards. An entropy regularization term \citep{Williams1991function, Mnih2016asynchronous} is also added to encourage exploration of the RL agent. Although policy gradient methods only guarantee local convergence under proper conditions \citep{Sutton2000policy},  we remark that the inferred graphs from the actor-critic algorithm are all DAGs in our experiments.
	
Training an RL agent typically requires many iterations. In the present work, we find that computing the rewards for generated graphs is much more time-consuming than training NNs. Therefore, we record the computed rewards corresponding to different graph structures. Moreover, the BIC score can be decomposed according to single causal relationships and we also record the corresponding   $\mathrm{RSS}_i$ to avoid repeated computations.
\subsection{Final Output}
\label{sec:final}
Since we are concerned with finding a DAG with the best score rather than a policy, we record all the graphs generated during the training process and output the one with the best reward. In practice, the graph may contain spurious edges and further processing is needed. 
	
To this end, we can prune the estimated edges in a greedy way, according to either the regression performance or the score function. For an inferred causal relationship, we remove a parental variable and calculate the performance of the resulting graph, with all other causal relationships unchanged. If the performance does not degrade or degrade within a predefined tolerance, we accept pruning and continue this process with the pruned causal relationship. For linear models, pruning can be simply done by  thresholding the estimated coefficients. 
	
Related to the above pruning process is to add to the score function an increased penalty weight on the number of edges of a graph. However, this weight is not easy to choose, as a large weight may incur missing edges. In this work, we stick to the penalty weight $\log m$ that is included in the BIC score and then apply pruning to the inferred graph in order to reduce false discoveries.
	
\section{Experimental Results}
\label{sec:exp}
We report empirical results on synthetic and real datasets to compare our approach against both traditional and recent gradient based approaches, including GES (with BIC score) \citep{Chickering2002optimal,Ramsey2017million}, the PC algorithm (with Fisher-z test and $p$-value $0.01$) \citep{Spirtes2000}, ICA-LiNGAM \citep{Shimizu2006lingam}, the Causal Additive Model (CAM) based algorithm proposed by \cite{Buhlmann2014cam},  NOTEARS \citep{Zheng2018dags}, DAG-GNN \citep{Yu19DAGGNN}, and GraN-DAG \citep{Lachapelle2019grandag}, among others. All these algorithms have available  implementations and we give a brief description on these algorithms and their implementations  in Appendix~\ref{sec:implement_link}. Default hyper-parameters of these implementations are used unless otherwise stated. For pruning, we use the same thresholding method for ICA-LiNGAM, NOTEARS, and DAG-GNN. Since the authors of CAM and GraN-DAG propose to apply significance testing of covariates based on generalized additive models and then declare significance if the reported $p$-values are lower than or equal to $0.001$, we stick to the same pruning method for CAM and GraN-DAG.
	
The proposed RL based approach is implemented based on an existing Tensorflow \citep{Abadi2016tensorflow} implementation of neural combinatorial optimizer (see Appendix~\ref{sec:implement_link} for more details). The decoder is modified as described in Section~\ref{sec:nn_graph} and the RL algorithm related hyper-parameters are left unchanged. We pick $B=64$ as batch size at each iteration and $d_h=16$ as the hidden dimension with the single layer decoder. Our approach is combined with the BIC scores under Gaussianity assumption given in Eqs.~(\ref{eqn:BIC}) and (\ref{eqn:BIC_same_var}), and are denoted as RL-BIC and RL-BIC2, respectively. 

We evaluate the estimated graphs using three metrics: False Discovery Rate (FDR), True Positive Rate (TPR), and Structural Hamming Distance (SHD) which is the smallest number of edge additions, deletions, and reversals to convert the estimated graph into the true DAG. The SHD takes into account both false positives and false negatives and a lower SHD indicates a better estimate of the causal graph. Since GES and PC may output unoriented edges, we follow \cite{Zheng2018dags} to treat GES and PC favorably by regarding undirected edges as true positives as long as the true graph has a directed edge in place of the undirected edge.
	
\subsection{Linear Model with Gaussian and Non-Gaussian Noise}
\label{sec:exp1}
Given number of variables $d$, we generate a $d\times d$ upper triangular matrix as the graph binary adjacency matrix, in which the upper entries are sampled independently from $\mathrm{Bern}(0.5)$. We assign edge weights independently from $\mathrm{Unif}\left([-2, -0.5]\cup[0.5,2]\right)$ to obtain a weight matrix $W\in\mathbb R^{d\times d}$, and then sample $\mathbf x = W^T\mathbf x + \mathbf n\in\mathbb R^d$ from both Gaussian and non-Gaussian noise models. The non-Gaussian noise is the same as the one used for ICA-LiNGAM \citep{Shimizu2006lingam}, which  generates samples from a Gaussian distribution and passes them through a power nonlinearity to make them non-Gaussian. We pick unit noise variances in both models and generate $m=5,000$ samples as our datasets. A random permutation of variables is then performed. This data generating procedure is similar to that used by NOTEARS and DAG-GNN and the true causal graphs in both cases are known to be identifiable \citep{Shimizu2006lingam,Peters2013identifiability}.
	\begin{figure}[hbtp!]
		\centering     
		\subfigure[penalty weights]{\label{fig:a}\includegraphics[width=0.386\textwidth]{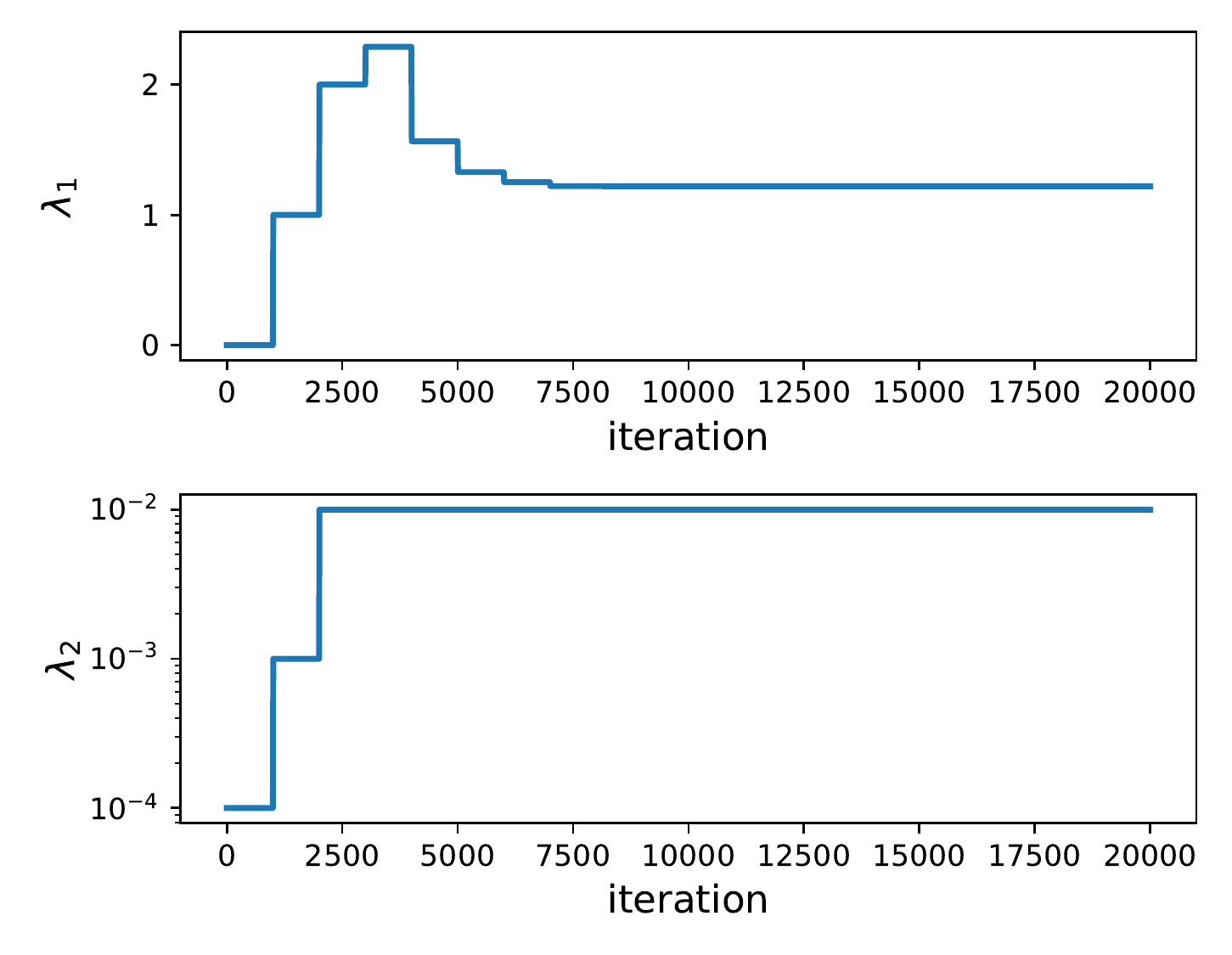}}
		\hspace{3em}
		\subfigure[negative reward]{\label{fig:b}\includegraphics[width=0.4\textwidth]{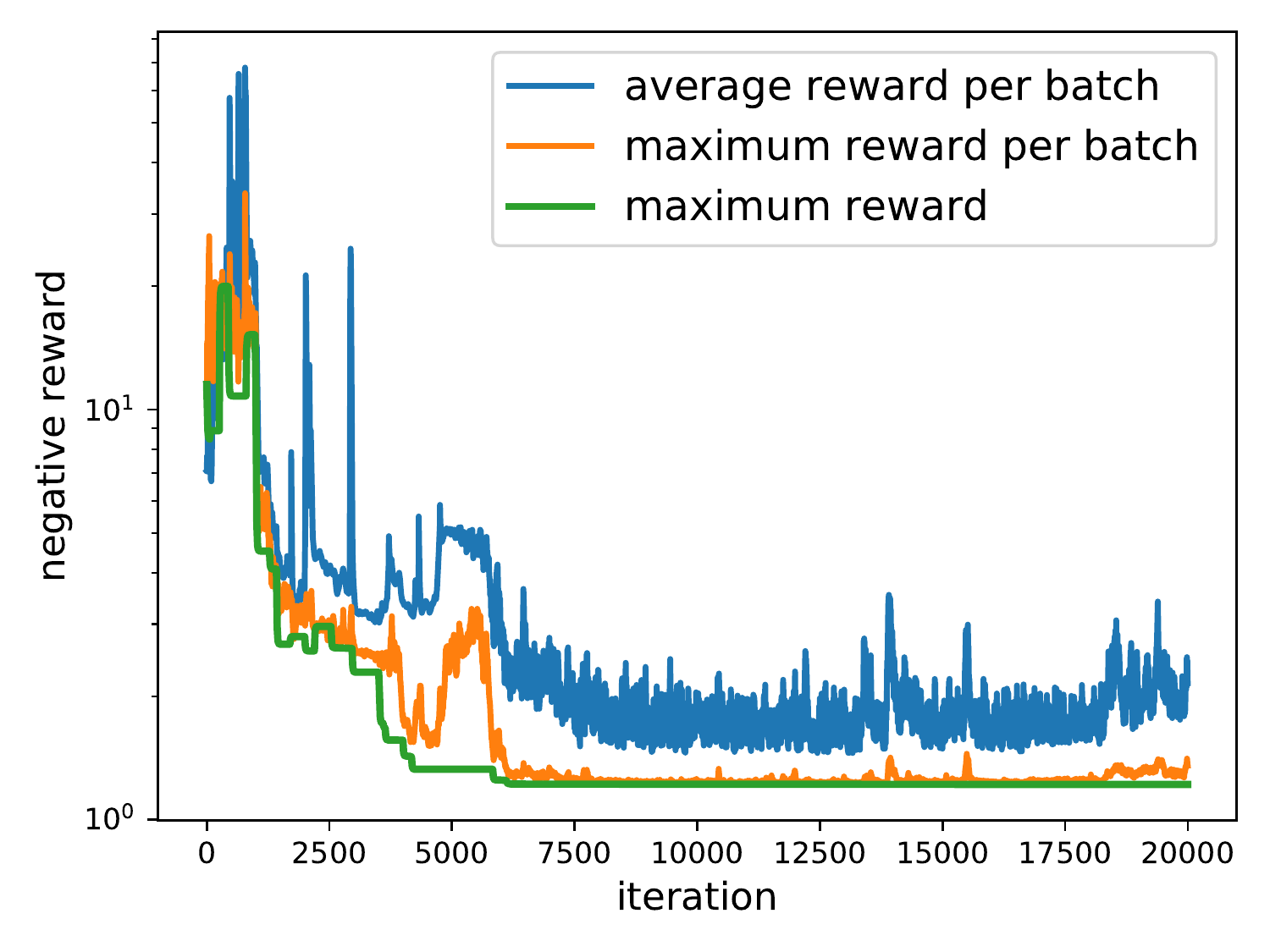}}
		\caption{Learning process of the proposed method RL-BIC2 on a linear-Gaussian dataset.}
		\label{fig:RL_tran}
	\end{figure}
\begin{table}[hbtp!]
		\centering
		\caption{Empirical results on LiNGAM and linear-Gaussian data models with 12-node graphs.}
		\addtolength{\tabcolsep}{-3.4pt} {\scriptsize
		\begin{tabular}{ccccccccccc}
			\toprule
			 & & RL-BIC & RL-BIC2 &PC&GES& {\hspace{-1pt}ICA-LiNGAM\hspace{-1pt}} &CAM &NOTEARS & DAG-GNN &GraN-DAG \vspace{-1pt}\\ 
			\midrule
			\multirowcell{3}{LiNGAM}& FDR & 0.28\,$\pm$\,0.11& 0\,$\pm$\,0 & 0.06\,$\pm$\,0.04& 0.62\,$\pm$\,0.06 & 0\,$\pm$\,0 & 0.67\,$\pm$\,0.08&0.04\,$\pm$\,0.03 &0.11\,$\pm$\,0.03 &0.63\,$\pm$\,0.10\\ 
			&	TPR & 0.71\,$\pm$\,0.17 &1\,$\pm$\,0 & 0.25\,$\pm$\,0.03 & 0.25\,$\pm$\,0.04 & 1\,$\pm$\,0  & 0.49\,$\pm$\,0.07&0.95\,$\pm$\,0.05 & 0.94\,$\pm$\,0.04 &0.37\,$\pm$\,0.15\\ 
			&SHD & 17.4\,$\pm$\,7.50 & 0\,$\pm$\,0 &31.8\,$\pm$\,2.04 & 32.8\,$\pm$\,2.93 & 0\,$\pm$\,0 &40.4\,$\pm$\,5.92&~~2.4\,$\pm$\,2.42 & 5.00\,$\pm$\,1.41 &36.0\,$\pm$\,5.33\vspace{-1pt} \\ 
			\midrule
			\multirowcell{3}{Linear-\\Gaussian}& FDR & 0.38\,$\pm$\,0.13 & 0\,$\pm$\,0 & 0.52\,$\pm$\,0.07  & 0.63\,$\pm$\,0.06 & 0.65\,$\pm$\,0.02 &0.70\,$\pm$\,0.08&0.02\,$\pm$\,0.02 &0.10\,$\pm$\,0.05 & 0.70\,$\pm$\,0.17\\ 
			&	TPR & 0.66\,$\pm$\,0.12 & 1\,$\pm$\,0& 0.31\,$\pm$\,0.03& 0.24\,$\pm$\,0.04  & 0.73\,$\pm$\,0.05&0.44\,$\pm$\,0.11 &0.98\,$\pm$\,0.02 &0.95\,$\pm$\,0.05 & 0.27\,$\pm$\,0.13\\ 
			&SHD & 22.2\,$\pm$\,6.34 & 0\,$\pm$\,0 &  29.6\,$\pm$\,3.01 & 33.2\,$\pm$\,2.48 & 46.2\,$\pm$\,2.79 & 40.8\,$\pm$\,4.53&~~1.0\,$\pm$\,0.89& 4.40\,$\pm$\,2.06 &38.2\,$\pm$\,6.68\vspace{-1pt} \\ 
			\bottomrule
		\end{tabular}}
		\label{tab:linear6node}
	\end{table}
	
We first consider graphs with $d=12$ nodes. We use $n=64$ for constructing the input sample and set the maximum number of iterations to $20,000$. We use a threshold $0.3$, same as NOTEARS and DAG-GNN with this data model, to prune the estimated edges.  Figure~\ref{fig:RL_tran} shows the learning process of the proposed method RL-BIC2 on a linear-Gaussian dataset.  In this example, RL-BIC2 generates $683,784$ different graphs during training, much lower than the total number (around $5.22\times10^{26}$) of DAGs. The pruned DAG turns out to be exactly the same as the underlying causal graph. 	

We report the empirical results on LiNGAM and linear-Gaussian data models in Table~\ref{tab:linear6node}. Both PC and GES perform poorly, possibly because we consider relatively dense graphs for our data generating procedure. CAM does not perform well either, as it assumes nonlinear causal relationships. ICA-LiNGAM recovers all the true causal graphs for LiNGAM data but performs poorly on linear-Gaussian data. This is not surprising because ICA-LiNGAM works for non-Gaussian noise and does not provide guarantee for linear-Gaussian datasets. Both NOTEARS and DAG-GNN have good causal discovery results whereas GraN-DAG performs much worse. We believe that it is because GraN-DAG uses 2-layer feed-forward NNs to model the causal relationships, which may not be able to learn a good linear relationship in this experiment. Modifying the feed-forward NNs to linear functions reduces to NOTEARS with negative log-likelihood as loss function, which yields similar performance on these datasets (see Appendix~\ref{app:exp1} for detailed results). As to our proposed methods, we observe that RL-BIC2 recovers all the true causal graphs on both data models in this experiment while RL-BIC has a worse performance. One may wonder whether this observation is due to the same noise variances that are used in our data models; we conduct additional experiments where the noise variances are randomly sampled and RL-BIC2 still outperforms RL-BIC by a large margin (see also Appendix~\ref{app:exp1}). Nevertheless, with the same BIC score, RL-BIC performs much better than GES on both datasets, indicating that the RL approach brings in a greatly improved search ability. 

	
Finally, we test the proposed method on larger graphs with $d=30$ nodes, where the upper entries are sampled independently from $\mathrm{Bern}(0.2)$. This edge probability choice corresponds to the fact that large graphs usually have low  edge degrees in practice; see, e.g.,  the experiment settings of \cite{Zheng2018dags,Yu19DAGGNN,Lachapelle2019grandag}. To incorporate this prior information in our approach, we add to each $g_{ij}$ a common bias term initialized to $-10$ (see Appendix~\ref{app:exp1} for details). Considering the much increased search space, we also choose a larger number of observed samples, $n=128$, to construct the input for graph generator and increase the training iterations to $40,000$. On LiNGAM datasets, RL-BIC2 has FDR, TPR, and SHD being $0.14\,\pm\,0.15$, $0.94\pm 0.07$, and $19.8\pm 23.0$, respectively, comparable to NOTEARS with $0.13\pm 0.09$, $0.94\pm 0.04$, and $17.2\pm 13.12$. 

\subsection{Nonlinear Model with Quadratic Functions}
\label{sec:exp2}
We now consider nonlinear causal relationships with quadratic functions. We generate an upper triangular matrix in a similar way to the first experiment. For a causal relationship with parents $\mathbf x_{\mathrm{pa}(i)}=[x_{i_1},x_{i_2},\ldots]^T$ at the $i$-th node, we expand $\mathbf x_{\mathrm{pa}(i)}$ to contain both first- and second-order features. The coefficient for each term is then either $0$ or sampled from $\mathrm{Unif}\left([-1, -0.5]\cup[0.5,1]\right)$,  with equal probability. If a parent variable does not appear in any feature term with a non-zero coefficient, then we  remove the corresponding edge in the causal graph. The rest follows the same as in first experiment and here we use the non-Gaussian noise model with $10$-node graphs and $5,000$ samples. The true causal graph is identifiable according to \cite{Peters2014causal}. For this quadratic model, there may exist very large variable values which cause computation issues for quadratic regression. We treat these samples as outliers and   detailed processing is given in Appendix~\ref{app:exp2}.
	
We use quadratic regression for a given causal relationship and calculate the BIC score (assuming equal noise variances) in Eq.~(\ref{eqn:BIC_same_var}). For pruning, we simply apply thresholding, with threshold as $0.3$, to the estimated coefficients of both first- and second-order terms. If the coefficient of a second-order term, e.g., $x_{i_1}x_{i_2}$, is non-zero after thresholding, then we have two directed edges that are from $x_{i_1}$ to $x_i$ and from $x_{i_2}$ to $x_i$, respectively. We do not consider PC and GES in this experiment due to their poor performance in the first experiment.  Our results with $10$-node graphs are reported in Table~\ref{tab:qudractice}, which shows that RL-BIC2 achieves the best performance.
\begin{table}[hbtp]
	\centering
	\caption{Empirical results on nonlinear models with quadratic functions. }
	\addtolength{\tabcolsep}{-3pt} 
	{\scriptsize
	\begin{tabular}{ccccccccc} 
		\toprule
			~& RL-BIC2 & NOTEARS & NOTEARS-2&NOTEARS-3 & ICA-LiNGAM & CAM & DAG-GNN & GraN-DAG \vspace{-1pt}\\
			\midrule
			FDR & 0.02\,$\pm$\,0.04& 0.35\,$\pm$\,0.06 &0.15\,$\pm$\,0.10& 0\,$\pm$\,0 & 0.47\,$\pm$\,0.06 & 0.32\,$\pm$\,0.17&0.39\,$\pm$\,0.04&0.40\,$\pm$\,0.17 \\ 
			TPR & 0.98\,$\pm$\,0.04 &0.71\,$\pm$\,0.16 &0.70\,$\pm$\,0.15 &0.79\,$\pm$\,0.20& 0.76\,$\pm$\,0.09 & 0.78\,$\pm$\,0.05&0.55\,$\pm$\,0.14&0.73\,$\pm$\,0.16 \\ 
			SHD & ~~0.6\,$\pm$\,1.20 & 14.8\,$\pm$\,3.37 & ~~8.8\,$\pm$\,3.82 &~~5.2\,$\pm$\,5.19& 20.4\,$\pm$\,5.00 & 14.1\,$\pm$\,5.12&18.0\,$\pm$\,2.45&~39.6\,$\pm$\,5.85	\vspace{-1pt} \\ 
			\bottomrule
		\end{tabular}}
		\label{tab:qudractice}
	\end{table}
	
For fair comparison, we  apply the same quadratic regression based pruning method to the outputs of NOTEARS, denoted as NOTEARS-2. We see that this pruning further reduces FDR, i.e., removes spurious edges, with little effect on TPR. Since pruning does not help discover additional positive edges or increase TPR, we will not apply this pruning method to other methods as their TPRs are much lower than that of RL-BIC2. Finally, with prior knowledge that the function form is quadratic, we can modify NOTEARS to apply quadratic functions to modeling the causal relationships, with an equivalent weighted adjacency matrix constructed using the coefficients of the first- and second-order terms, similar to the idea used by GraN-DAG (detailed derivations are given in Appendix~\ref{app:exp2}). The problem then becomes a nonconvex optimization problem with $(d-1)d^2/2$ parameters (which are the coefficients of both first- and second-order features), compared to the original NOTEARS with $d^2$  parameters. This method corresponds to NOTEARS-3 in Table~\ref{tab:qudractice}. Despite the fact that NOTEARS-3 did not achieve a better overall performance than RL-BIC2, we comment that it discovered almost correct causal graphs (with $\text{SHD}\leq2$) on more than half of the datasets, but performed poorly on the rest datasets. We believe that it is due to the increased number of optimization parameters and the more complicated equivalent adjacency matrix which make the optimization problem harder to solve. Meanwhile, we do not exclude that NOTEARS-3 can achieve a better causal discovery performance with  other optimization algorithms.

\subsection{Nonlinear Model with Gaussian Processes}
\label{sec:exp3}
Given a randomly generated causal graph, we consider another nonlinear model where each causal relationship $f_i$ is a function sampled from a Gaussian process with RBF kernel of bandwidth one. The additive noise $n_i$ is normally distributed with variance sampled uniformly. This setting is known to be identifiable according to \cite{Peters2014causal}. We use a setup that is also considered by GraN-DAG \citep{Lachapelle2019grandag}: $10$-node and $40$-edge graphs with $1,000$ generated samples.
	
The empirical results are reported in Table~\ref{tab:gp}. One can see that ICA-LiNGAM, NOTEARS, and DAG-GNN perform poorly on this data model. A possible reason is that they may not be able to model this type of causal relationship. More importantly, these methods operate on a notion of weighted adjacency matrix, which is not obvious here. For our method, we apply Gaussian Process Regression (GPR) with RBF kernel to model the causal relationships. Notice that even though the observed data are from a function sampled from Gaussian process, it is not guaranteed that GPR with the same kernel can achieve a good performance. Indeed, using a fixed kernel bandwidth would lead to severe overfitting that incurs many spurious edges and the graph with the highest reward is usually not a DAG. To proceed, we normalize the observed data and  apply median heuristics for kernel bandwidth. Both our methods perform reasonably well, with RL-BIC outperforming all the other methods.
	\begin{table}[hbtp!]
		\centering
		\caption{Empirical results  on nonlinear models with Gaussian processes. }
		\addtolength{\tabcolsep}{-2pt} 
		{\scriptsize
		\begin{tabular}{cccccccccccc} 
	\toprule
			~& RL-BIC  & RL-BIC2 &  ICA-LiNGAM  &NOTEARS& DAG-GNN  & GraN-DAG  & CAM  \vspace{-1pt}\\
			\midrule
			FDR &  0.14\,$\pm$\,0.03 &0.17\,$\pm$\,0.12&  0.48\,$\pm$\,0.04 &0.48\,$\pm$\,0.19 & 0.36\,$\pm$\,0.11 &0.12\,$\pm$\,0.08&0.15\,$\pm$\,0.07\\ 
			TPR & 0.96\,$\pm$\,0.03 & 0.80\,$\pm$\,0.09  &0.63\,$\pm$\,0.07 &0.18\,$\pm$\,0.09&  0.07\,$\pm$\,0.03&0.81\,$\pm$\,0.05  &0.82\,$\pm$\,0.04\\ 
			SHD &  ~~6.2\,$\pm$\,1.33&12.0\,$\pm$\,5.18&30.4\,$\pm$\,2.50&33.8\,$\pm$\,2.56   & 34.6\,$\pm$\,1.36 &10.2\,$\pm$\,2.39&10.2\,$\pm$\,2.93\vspace{-1pt} \\ 
			\bottomrule
		\end{tabular}}
		\label{tab:gp}
	\end{table}

\subsection{Real Data}
We consider a real dataset to discover a protein signaling network based on expression levels of proteins and phospholipids \citep{Sachs2005causal}. This dataset is a common benchmark in graphical models, with experimental annotations well accepted by the biological community. Both observational and interventional data are contained in this dataset. Since we are interested in using observational data to infer causal mechanisms, we only consider the observational data with $m=853$ samples. The ground truth causal graph given by \cite{Sachs2005causal} has $11$ nodes and $17$ edges.

Notice that the true graph is indeed sparse and an empty graph can have an SHD as low as $17$. Therefore, we report more detailed results regarding the estimated graph: number of total edges, number of correct edges, and the SHD. Both PC and GES output too many unoriented edges, and we will not report their results here. We apply GPR with RBF kernel to modeling the causal relationships, with the same data normalization and median heuristics for kernel bandwidth as in Section~\ref{sec:exp3}. We also use CAM pruning on the inferred graph from the training process. The empirical results are given in Table~\ref{tab:sachs}. Both RL-BIC and RL-BIC2 achieve promising results, compared with  other methods. 
	\begin{table}[hbtp!]
		\centering
		\caption{Empirical results on Sachs dataset. }
		\addtolength{\tabcolsep}{-2pt} 
		{\scriptsize
		\begin{tabular}{rccccccccccc} 
	\toprule
			~& RL-BIC  & RL-BIC2 &  ICA-LiNGAM  & CAM &NOTEARS& DAG-GNN  & GraN-DAG   \vspace{-1pt}\\
			\midrule
			Total Edges & 10 & 10  &8 &10&20&  15&10 \\ 
			Correct Edges&  6 &7&  4 &6&6 & 6 &5\\ 
			SHD &  12&11&14&12&19  &16 &13\vspace{-1pt} \\ 
			\bottomrule
		\end{tabular}}
		\label{tab:sachs}
	\end{table}

\section{Concluding Remarks and Future Works}
We have proposed to use RL to search for the DAG with the optimal score. Our reward is designed to incorporate a predefined score function and two penalty terms to enforce acyclicity. We use the actor-critic algorithm as our RL algorithm, where the actor is constructed based on recently developed encoder-decoder models. Experiments are conducted on both synthetic and real datasets to show the advantages of our method over other causal discovery methods.
	
We have also shown the effectiveness of the proposed method with $30$-node graphs, yet dealing with large graphs (with more than $50$ nodes) is still challenging. Nevertheless, many real applications, like Sachs dataset \citep{Sachs2005causal}, have a relatively small number of variables. Furthermore, it is possible to decompose large causal discovery problems into smaller ones; see, e.g., \cite{Ma2008structural}. Prior knowledge or  constraint-based methods is also applicable to reduce the search space. 

There are several future directions from the present work. In our current implementation, computing scores is much more time consuming than training NNs. We believe that developing a more efficient and effective score function will further improve the proposed approach. Other powerful  RL algorithms may also be used. For example, the asynchronous advantage actor-critic  algorithm has been shown to be  effective in many applications \citep{Mnih2016asynchronous,Zoph2016neural}. In addition, we observe that the total iteration numbers used in our experiments are usually  more than needed (see, e.g., Figure~\ref{fig:b}). A proper early stopping criterion will be favored. 

\subsubsection*{Acknowledgments}
The authors are grateful to the anonymous reviewers for valuable comments and suggestions. The authors would also like to thank Prof.~Jiji Zhang from Lingnan University, Dr.~Yue Liu from Huawei Noah's Ark Lab, and Zhuangyan Fang from Peking University for many helpful discussions.

\bibliography{SZhuBib}
\bibliographystyle{abbrvnat}

\newpage
\appendix
\section*{Appendix}
\section{More Details About Decoders}
\label{app:decoders}
We briefly describe the NN based decoders for generating binary adjacency matrices:
\begin{itemize}
\item Single layer decoder:\begin{align}
g_{ij}(W_1,W_2,u) = u^{T} \tanh(W_1\,enc_i+W_2\,enc_j),\nn
\end{align}
where $W_1,W_2\in\mathbb R^{d_h\times d_{e}}$, $u\in\mathbb R^{d_h\times 1}$ are trainable parameters and $d_h$ denotes the hidden dimension associated with the decoder.
    \item Bilinear decoder: 
    \begin{align}
        g_{ij}(W) = enc_i^TW enc_j,\nn
    \end{align}
    where $W\in\mathbb R^{d_{e}\times d_{e}}$ is a trainable parameter.
    \item Neural Tensor Network (NTN) decoder \citep{Socher2013reasoning}:
    \begin{align}
g_{ij}(W^{[1:K]}, V, b) = u^T\tanh\left(enc_i^TW^{[1:K]}enc_j+V [enc_i^T, enc_j^T]^T+b\right),\nn
\end{align}
where $W^{[1:k]}\in\mathbb R^{d_{e}\times d_{e}\times K}$ is a tensor and the bilinear tensor product $enc_i^TW^{[1:K]}enc_j$ results in a vector with each entry being $enc_i^TW^{[k]}enc_j$ for $k=1,2,\ldots, K$, $V\in\mathbb R^{K\times 2d_{e}}$, $u\in\mathbb R^{K\times1}$, and $b\in \mathbb R^{K\times1}$.
\item Transformer decoder uses the multi-head attention module to obtain the decoder outputs $\{dec_i\}$, followed by a feed-forward NN whose weights are shared across  all $dec_i$ \citep{Vaswani2017attention}. The output consists of $d$ vectors in $\mathbb R^d$ which are treated as the row vectors of a $d\times d$ matrix. We then pass each element of this matrix into sigmoid functions and sample a binary adjacency matrix accordingly.
\end{itemize}

Table~\ref{tab:decoders} provides the empirical results on linear-Gaussian data models with $12$-node graphs and unit variances (see Section~\ref{sec:exp1} for more details on this data generating procedure). In our implementation, we pick $d_{e}=64$ as the dimension of the encoder output, $d_h=16$ for the single layer decoder, and $K=2$ for the NTN decoder. We find that single layer decoder performs the best, possibly because it has less parameters and is easier to train to find better DAGs, while the Transformer encoder has provided sufficient interactions amongst variables.
	\begin{table}[htbp!]
		\centering
		\caption{Empirical results of different decoders on linear-Gaussian data models with 12-node graphs.}
		\addtolength{\tabcolsep}{-1pt} {\footnotesize 
		\begin{tabular}{lccccc}
			\toprule
			 & & Single layer & Bilinear & NTN & Transformer\\ 
			\midrule
			\multirowcell{3}{Linear-\\Gaussian}
			& FDR & 0\,$\pm$\,0 & 0.07\,$\pm$\,0.13 & 0.12\,$\pm$\,0.14& 0.67\,$\pm$\,0.07  \\ 
			&TPR & 1\,$\pm$\,0 & 0.95\,$\pm$\,0.08& 0.90\,$\pm$\,0.11 & 0.32\,$\pm$\,0.09 \\ 
			&SHD & 0\,$\pm$\,0  & ~~4.0\,$\pm$\,7.04 &  ~~7.4\,$\pm$\,8.52& ~38.2\,$\pm$\,3.12  \vspace{-1pt} \\ 
			\bottomrule
		\end{tabular}}
		\label{tab:decoders}
	\end{table}
\section{Equivalence of Problems~(\ref{eqn:score_based}) and (\ref{eqn:maxreward})}
\label{sec:proof_prop1}
\begin{proof}[Proof of Proposition~\ref{prop:eq1_3}]
Let $\mathcal G$ be a solution to Problem~(\ref{eqn:score_based}). Then we have $\mathcal S^*=\mathcal S(\mathcal G)$ and $\mathcal G$ must be a DAG due to the hard acyclicity constraint. Assume that $\mathcal G$ is not a solution to Problem~(\ref{eqn:maxreward}), which indicates that there exists a directed graph $\mathcal G'$ (with binary adjacency matrix $A'$) so that 
\begin{align}
\label{eqn:propo1}
\mathcal S^* > \mathcal S(\mathcal G') + \lambda_1\mathbf I(\mathcal G' \notin\mathsf{DAGs}) +  \lambda_2 h(A').
\end{align}
Clearly, $\mathcal G'$ cannot be a DAG, for otherwise we would have a DAG that achieves a lower score than the minimum $\mathcal S^*$. By our assumption, it follows that
\[\text{r.h.s.~of Eq.~(\ref{eqn:propo1})}\geq\mathcal S_L+\lambda_1 +\lambda_2 h_{\min}\geq \mathcal S_U,\]
which contradicts the fact that $\mathcal S_U$ is an upper bound on $\mathcal S^*$.

For the other direction, let $\mathcal G$ be a solution to Problem~(\ref{eqn:maxreward}) but not a solution to Problem~(\ref{eqn:score_based}). This indicates that either $\mathcal G$ is not a DAG or $\mathcal G$ is a DAG but has a higher score than the minimum score, i.e., $\mathcal S(\mathcal G)>\mathcal S^*$. The latter case clearly contradicts the definition of the minimum score. For the former case, assume that some DAG $\mathcal G'$ achieves the minimum score. Then plugging $\mathcal G'$ into the negative reward, we can get the same inequality in Eq.~(\ref{eqn:propo1}) since both penalty terms are zeros for a DAG. This then contradicts the assumption that $\mathcal G$ minimizes the negative reward.
\end{proof}

\section{Penalty Weight Choice}
\label{app:only_indicator}
Although setting $\lambda_2=0$, or equivalently using only the indicator function w.r.t.~acyclicity, can still make Problem~(\ref{eqn:maxreward}) equivalent to the original problem with hard acyclicity constraint, we remark that this choice usually does not result in good performance of the RL approach, largely due to that the reward with only the indicator term is likely to fail to guide the RL agent to generate DAGs.

To see why it is the case, consider two cyclic directed graphs, one with all the possible directed edges in place and the other with only two edges (i.e., $x_i\to x_j$ and  $x_j\to x_i$ for some $i\neq j$). The latter is much `closer' to acyclicity in many senses, such as $h(A)$ given in Eq.~(\ref{eqn:dagness}) and number of edge deletion, addition, and reversal to make a directed graph acyclic. Assume a linear data model that has a relatively dense graph. Then the former graph will have a lower BIC score when using linear regression for fitting causal relations, yet the penalty terms of acyclicity are the same with only the indicator function. The former graph then has a higher reward, which does not help the agent to tend to generate DAGs. Also notice that the graphs in our approach are generated according to Bernoulli distributions determined by NN parameters that are randomly initialized. Without loss of generality, consider that each edge is drawn independently according to $\mathrm{Bern}(0.5)$. For small graphs (with less than or equal to $6$ nodes or so), a few hundreds of samples of directed graphs are very likely to contain a DAG. Yet for large graphs, the probability of sampling a DAG is much lower. If no DAG is generated during training, the RL agent can hardly learn to generate DAGs. The above facts indeed motivate us to choose a small value of $\lambda_2$ so that the agent can be trained to produce graphs closer to acyclicity and finally to generate exact DAGs.

A question is then what if the RL approach starts with a DAG, e.g., by initializing the probability of generating each edge to be nearly zero. This setting did not lead to good performance, either. The generated directed graphs at early iterations can be very different from the true graphs in that many true edges do not exist, and the resulting score is much higher than the minimum under the DAG constraint. With small penalty weights of the acyclicity terms, the agent could be trained to produce cyclic graphs with better scores, similar to the case with randomly initialized NN parameters. On the other hand, large initial penalty weights, as we have discussed in the paper, limit exploration of the RL agent and usually result in DAGs whose scores are far from optimum.

\section{Implementation Details}
\label{sec:implement_link}
We use existing implementations of causal discovery algorithms in comparison, listed below:
\begin{itemize}
\item ICA-LiNGAM \citep{Shimizu2006lingam} assumes linear non-Gaussian additive model for data generating procedure and applies Independent Component Analysis (ICA) to recover the weighted adjacency matrix, followed by thresholding on the weights before outputting the inferred graph. A Python implementation is available at the first author's website \url{https://sites.google.com/site/sshimizu06/lingam}. 
\item GES and PC: we use the fast greedy search implementation of GES \citep{Ramsey2017million} which has been reported to outperform other techniques such as max-min hill climbing \citep{Han2016estimation,Zheng2018dags}. Implementations of both methods are available through the {\tt py-causal} package at \url{https://github.com/bd2kccd/py-causal},  written in highly optimized Java codes.
\item CAM \citep{Peters2014causal} decouples the causal order search among the variables from feature or edge selection in a DAG. CAM also assumes additive noise as in our work, with an additional condition that each function is nonlinear. Codes are available through the CRAN R package repository at \url{https://cran.r-project.org/web/packages/CAM}.
\item NOTEARS \citep{Zheng2018dags} recovers the causal graph by estimating the weighted adjacency matrix with the least squares loss and the smooth characterization for acyclicity constraint, followed by thresholding on the estimated weights. Codes are available at the first author's github repository \url{https://github.com/xunzheng/notears}. We also re-implement the augmented Lagrangian method following the same updating rule on the Lagrange multiplier and the penalty
parameter in Tensorflow, so that the augmented Lagrangian at each iteration can be readily minimized without the need of obtaining closed-form gradients. We use this  implementation in Sections~\ref{sec:exp2} and \ref{sec:exp3} when the objective function and/or the acyclicity constraint are modified.
\item DAG-GNN \citep{Yu19DAGGNN} formulates causal discovery in the framework of variational autoencoder, where the encoder and decoder are two shallow graph NNs. With a modified smooth characterization on acyclicity, DAG-GNN optimizes a weighted adjacency matrix with the evidence lower bound as loss function. Python codes are available at the first author's github repository \url{https://github.com/fishmoon1234/DAG-GNN}.
\item GraN-DAG \citep{Lachapelle2019grandag} uses feed-foward NNs to model each causal relationship and chooses the sum of all product paths between variables $x_i$ and $x_j$ as the $(i,j)$-th element of an equivalent weighted adjacency matrix. GraN-DAG uses the same smooth constraint from \cite{Zheng2018dags} to find a DAG that maximizes the log-likelihood of the observed samples. Codes are available at the first author's github repository \url{https://github.com/kurowasan/GraN-DAG}.
\end{itemize}

Our implementation is based on an existing Tensorflow implementation of neural combinatorial optimizer available at \url{https://github.com/MichelDeudon/neural-combinatorial-optimization-rl-tensorflow}. We add an entropy regularization term, and modify the reward and decoder as described in Sections~\ref{sec:NNforGraph} and \ref{sec:reward}, respectively. Our codes have been made available at \url{https://github.com/huawei-noah/trustworthyAI/tree/master/Causal_Structure_Learning/Causal_Discovery_RL}.

\section{Further Experiment Details and Results}
\subsection{Experiment 1 in Section~\ref{sec:exp1}}
\label{app:exp1}
We replace the feed-forward NNs  with linear functions in GraN-DAG and obtain similar experimental results as NOTEARS (FDR, TPR, SHD): $0.05\pm0.04$, $0.93\pm 0.06$, $3.2\pm2.93$ and $0.05\pm0.04$, $0.95\pm0.03$, $2.40\pm1.85$ for LiNGAM and linear-Gaussian data models, respectively.

We conduct additional experiments with linear models where the noise variances are uniformly sampled according to $\mathrm{Unif}([0.5, 2])$. Results are given in Table~\ref{tab:linear_diff_noise}.\footnote{Notice that linear-Gaussian data models with different noise variances are generally \emph{not} identifiable. It turns out that the Markov equivalence class is small and has at most $5$ DAGs for the datasets considered here. Moreover, the SHD between the DAGs in the Markov equivalence class and the true DAG is bounded by $3$, across all the datasets. Consequently, we may still use the true causal graph to evaluate the estimation performance. We thank Sebastien Lachapelle from Mila  for pointing this out.}
		\begin{table}[htbp!]
		\centering
		\caption{Empirical results on LiNGAM and linear-Gaussian data models with 12-node graphs and different noise variances.}
		\vspace{0.2em}
		\addtolength{\tabcolsep}{-3.7pt} {\scriptsize 
		\begin{tabular}{ccccccccccc}
			\toprule
			 & & RL-BIC & RL-BIC2 &PC&GES& {\hspace{-2pt}ICA-LiNGAM\hspace{-2pt}} & CAM & NOTEARS & DAG-GNN &GraN-DAG\vspace{-1pt}\\ 
			\midrule
			\multirowcell{ 3}{LiNGAM}& FDR & 0.29\,$\pm$\,0.12 & 0.09\,$\pm$\,0.06 & 0.57\,$\pm$\,0.10 &0.59\,$\pm$\,0.13 & 0\,$\pm$\,0 &0.70\,$\pm$0.07&  0.08\,$\pm$\,0.10 & 0.14\,$\pm$\,0.07 & 0.71\,$\pm$\,0.10\\ 
			&	TPR & 0.77\,$\pm$\,0.15 &0.94\,$\pm$\,0.03 & 0.28\,$\pm$\,0.06 & 0.27\,$\pm$\,0.10 & 1\,$\pm$\,0 &0.45\,$\pm$\,0.12 &0.94\,$\pm$\,0.07 & 0.91\,$\pm$\,0.04&0.25\,$\pm$\,0.09\\ 
			&SHD & 14.4\,$\pm$\,7.17 & ~~4.0\,$\pm$\,2.61 & 30.4\,$\pm$\,4.13 & 32.0\,$\pm$\,5.18 & 0\,$\pm$\,0 &41.6\,$\pm$\,3.32&3.20\,$\pm$\,3.97&~~6.6\,$\pm$\,1.02 &38.7\,$\pm$\,4.86 \vspace{-1pt} \\ 
			\midrule
			\multirowcell{ 3}{Linear-\\Gaussian}& FDR & 0.36\,$\pm$\,0.07 & 0.10\,$\pm$\,0.07 & 0.54\,$\pm$\,0.10  & 0.61\,$\pm$\,0.14 & 0.67\,$\pm$\,0.05 &0.65\,$\pm$\,0.10 &0.07\,$\pm$\,0.09&0.12\,$\pm$\,0.04 &0.71\,$\pm$\,0.12\\ 
			&	TPR & 0.68\,$\pm$\,0.09 & 0.93\,$\pm$\,0.04& 0.29\,$\pm$\,0.05&0.26\,$\pm$\,0.11  & 0.75\,$\pm$\,0.06 &0.51\,$\pm$\,0.14&0.95\,$\pm$\,0.06 &0.94\,$\pm$\,0.04 &0.21\,$\pm$\,0.08  \\ 
			&SHD & 18.8\,$\pm$\,3.43 & ~~4.6\,$\pm$\,3.07 &  30.0\,$\pm$\,2.83 & 32.2\,$\pm$\,5.42 & 49.0\,$\pm$\,4.82 &37.8\,$\pm$\,6.31& ~~3.0\,$\pm$\,3.58 &~~5.4\,$\pm$\,2.06&39.6\,$\pm$\,5.85\vspace{-1pt} \\ 
			\bottomrule
		\end{tabular}}
		\label{tab:linear_diff_noise}
	\end{table}
	
Knowing a sparse true causal graph \emph{a priori} is also helpful. To incorporate this information in our experiment with $30$-node graphs, we add an additional biased term $\tilde{c}\in\mathbb R$ to each decoder output: for the single layer decoder, we have 
\begin{align}
g_{ij}(W_1,W_2,u) = u^{T} \tanh(W_1\,enc_i+W_2\,enc_j) + \tilde c,\nn
\end{align}
where we let $\tilde c$ be trainable and other parameters have been defined in Appendix~\ref{app:decoders}. In our experiments, $\tilde c$ is initialized to be $-10$; this choice aims to set a good starting point for generating graph adjacency matrices, motivated by the fact that a good starting point is usually helpful to locally convergent algorithms.

\subsection{Experiment~2 in Section~\ref{sec:exp2}}
\label{app:exp2}
 To remove `outlier' samples with large values that may cause computation issues for quadratic regression, we sort the samples in ascending order according to their $\ell_2$-norms and then pick the first $3,000$ samples for causal discovery. 

 We use a similar idea from GraN-DAG to build an equivalent weighted adjacency matrix for NOTEARS. Take the first variable $x_1$ for example. We first expand the rest variables $x_2,\ldots, x_d$ to contain both first- and second-order features: $x_2,\ldots, x_d, x_2x_3,\ldots, x_{i}x_j, \ldots, x_{d-1}x_d$ with $i,j=2,\ldots,d$ and $i\leq j$. There are in total $d(d-1)/2$ terms and we use $\mathbf x_1$ to denote the vector that concatenates these feature terms. Correspondingly, we use $c_i$ and  $c_{ij}$ to denote the coefficients associated with these features and $\mathbf c_1$ to denote the concatenating vector of the coefficients. Notice that the variable $x_l$, $l\neq 1$ affects $x_1$ only through the terms $x_l$, $x_ix_l$ with $i\leq l$, and $x_lx_j$ with $j>l$. Therefore, an equivalent weighted adjacency matrix $W$ lying in $\mathbb R^{d\times d}$ can be constructed with the $(l, 1)$-th entry $W_{l1}\coloneqq|c_l|+\sum_{i=2}^l|c_{il}|+\sum_{j=l+1}^d|c_{lj}|$; in this way,  $W_{l1}=0$ implies that $x_l$ has no effect on $x_1$. The least squares term, corresponding to variable $x_1$, in the loss function will become $\sum_{k=1}^m\left(x_1^k-\mathbf c_1^T\mathbf x^k_1\right)^2$ where $m$ is the total number of samples. In summary, we  have the following optimization problem 
\begin{align}
	\min_{\mathbf c_1,\mathbf c_2,\ldots,\mathbf c_d} \quad & \sum_{i=1}^d\sum_{k=1}^m\left(x_i^k-\mathbf c_i^T\mathbf x^k_i\right)^2\nn\\
	\text{subject to} \quad & \mathrm{trace}\left(e^{W\circ W}\right) - d = 0,\nn
\end{align}
where $\circ$ denotes the element-wise product and the constraint enforces acyclicity w.r.t.~a weighted adjacency matrix \citep{Zheng2018dags}. The 
 problem has $(d-1)d^2/2$ parameters to optimize, while the original NOTEARS optimizes $d^2$ parameters. We solve this problem with augmented Lagrangian method where at each iteration the augmented Lagrangian is approximately minimized by  Adam \citep{Kingma2014adam} with Tensorflow. The Lagrange multiplier and the penalty parameter are updated in the same fashion as in the original NOTEARS.
\end{document}